\documentclass[]{jingdong}
\usepackage[toc,page,header]{appendix}
\usepackage{latexsym}
\usepackage[T1]{fontenc}
\usepackage[utf8]{inputenc}
\usepackage{microtype}
\usepackage{inconsolata}
\usepackage{graphicx}
\usepackage{url}            
\usepackage{booktabs}       
\usepackage{amsfonts}       
\usepackage{nicefrac}       
\usepackage{stackengine}
\usepackage{microtype}      
\usepackage{colortbl}
\usepackage{xcolor}
\usepackage{amsmath}
\usepackage{amssymb}
\usepackage{amsthm}
\usepackage{mathrsfs}
\usepackage{pifont}
\usepackage{MnSymbol}
\usepackage{balance}
\usepackage{enumitem}
\usepackage{listings}
\usepackage{xcolor}
\usepackage{natbib}
\usepackage{multicol}

\AtBeginDocument{%
  \providecommand\BibTeX{{%
    \normalfont B\kern-0.5em{\scshape i\kern-0.25em b}\kern-0.8em\TeX}}}

\makeatletter
\DeclareRobustCommand\onedot{\futurelet\@let@token\@onedot}
\def\@onedot{\ifx\@let@token.\else.\null\fi}

\usepackage{setspace}
\usepackage{mathtools}

\usepackage{multirow,booktabs}
\usepackage{subcaption}

\newcommand{\owo}[1]{\textsc{OAgents}}

\definecolor{lightgreen}{RGB}{144, 238, 144} 
\definecolor{lightred}{RGB}{255, 105, 97}

\newtcolorbox{promptbox}[2][Prompt]{
colback=black!5!white,
arc=5pt, 
boxrule=0.5pt,
fonttitle=\bfseries,
title=#1, 
before upper={\small}, fontupper=\fontfamily{ptm}\selectfont,
colframe=#2, 
}
\definecolor{ogreen}{RGB}{34, 139, 34}

\usepackage{cleveref}
\AtBeginDocument{%
}
\theoremstyle{plain}
\newtheorem{theorem}{Theorem}[section]

\theoremstyle{definition}

\theoremstyle{remark}
\usepackage{subcaption}

\usepackage[textsize=tiny]{todonotes}
\usepackage{fontawesome}

\usepackage{xcolor}
\usepackage{graphicx}
\usepackage[edges]{forest}

\usepackage{amsmath}

\title{Leveraging Error Diversity in Group Rollouts for Reinforcement Learning}
\author[1,2,*]{Wenpu~Liu}
\author[1,2,*]{Yuqi~Xu}
\author[1,2,*]{Weichu~Xie}
\author[2]{Yongfu~Zhu}
\author[2,3]{Shuai~Dong}
\author[1,2]{Ziyue~Wang}
\author[2]{Wenqi~Shao}
\author[2]{Xiaoying~Zhang}
\author[1,\dagger]{Tong~Yang}
\author[2]{Nan~Duan}
\author[2]{Jiaqi~Wang}

\affiliation[1]{Peking University}
\affiliation[2]{JD.COM}
\affiliation[3]{Shanghai Innovation Institute}

\contribution[*]{Equal contribution}
\contribution[\dagger]{Corresponding author.}

\usepackage{amsmath}
\usepackage{amsthm}
\usepackage{amssymb}
\usepackage{graphicx}
\usepackage{subcaption}
\usepackage{placeins}
\usepackage{float}
\usepackage{algorithm}
\usepackage{algpseudocode}
\usepackage{multirow}
\usepackage{graphicx}
\usepackage{caption}
\usepackage[table,x11names]{xcolor}

\usepackage{booktabs}
\usepackage{array}
\usepackage{dashrule} 
\usepackage{tcolorbox}
\usepackage{multicol}
\tcbuselibrary{skins,breakable}

\definecolor{mygray}{HTML}{e6e6e6}
\definecolor{myblue}{HTML}{cce0f5}
\definecolor{mypurple}{HTML}{e9e0f2}
\definecolor{lightred}{HTML}{FEEBEB}
\definecolor{casered}{HTML}{da291c}
\newcommand{\lightrule}{\arrayrulecolor{gray!30}\midrule\arrayrulecolor{black}}
\abstract{
    Reinforcement Learning from Verifiable Rewards (RLVR) typically samples multiple responses per prompt and assigns binary rewards based on individual correctness, yet the collective structure of the group output---specifically, the distribution of errors---is largely discarded. We identify this as a missed opportunity: empirical analysis reveals that error diversity within a group is a strong predictor of training success, with problems eliciting diverse wrong answers benefiting substantially more from RLVR than those producing homogeneous failures. Motivated by this observation, we propose \textbf{Error Diversity Advantage Shaping (EDAS)}, a lightweight, algorithm-agnostic technique that modulates the advantage signal for incorrect rollouts based on intra-group error diversity. EDAS amplifies penalties for dominant, repeated errors and attenuates penalties for rare, exploratory ones, thereby encouraging the model to maintain diverse reasoning paths and discouraging error perseveration. Crucially, EDAS operates as a simple post-hoc adjustment that can be seamlessly integrated into any RLVR algorithm. We validate EDAS on top of several mainstream RLVR methods across a series of models and seven challenging math benchmarks, demonstrating consistent improvements. Notably, EDAS yields an average improvement of 6.29 points over DAPO on Qwen3-8B across seven benchmarks, confirming that exploiting the latent information in group rollouts is a broadly effective strategy for strengthening RLVR.
}
\date{May 16, 2026}
\checkdata[Code]{\url{https://github.com/EDAS-jd/EDAS}}
\correspondence{\email{yangtong@pku.edu.cn}}

\begin{document}
\maketitle

\section{Introduction}

The rapid advance of large language models (LLMs) has been propelled by increasingly sophisticated post-training strategies, from supervised instruction tuning to reinforcement learning from human feedback~\cite{PPO,jaech2024openai,liu2024deepseek}. More recently, Reinforcement Learning from Verifiable Rewards (RLVR) has emerged as a powerful paradigm for domains where solutions are unknown but correctness is programmatically checkable---such as formal mathematics and competitive programming~\cite{shao2024deepseekmath,guo2025deepseek}. In the RLVR pipeline, the model simultaneously acts as generator and learner: it samples candidate reasoning paths, receives binary correctness feedback from a deterministic verifier, and refines its policy accordingly, enabling continuous self-evolution beyond human-provided examples.

A defining feature of modern RLVR algorithms is group-based rollout: for each training prompt, the policy generates a group of responses, and the reward signal is constructed from the correctness labels of individual outputs within the group. However, current methods treat these labels in isolation, using only the binary correct/incorrect distinction per sample to compute advantages~\cite{shao2024deepseekmath,yu2025dapo}. The collective structure of the group output---in particular, the distribution of errors across the incorrect responses---is entirely discarded. We argue that this represents a significant, overlooked information source.

Our empirical analysis (\autoref{sec:motivation}) reveals that the \textbf{diversity of wrong answers within a rollout group} is a strong predictor of whether RLVR training will improve the model’s performance on that problem---problems with diverse errors exhibit substantially higher improvement rates, whereas those with homogeneous failures see limited gains. Motivated by this observation, we propose \textbf{Error Diversity Advantage Shaping (EDAS)}, a lightweight, algorithm-agnostic technique that reshapes the advantage signal for incorrect rollouts based on intra-group error diversity: it amplifies penalties for dominant, repeated errors and attenuates penalties for rare, exploratory ones, thereby encouraging broader exploration and discouraging error perseveration. Crucially, EDAS operates as a simple post-hoc advantage adjustment that can be applied on top of any group-relative RLVR method without modifying the core training algorithm.

\begin{figure}[htbp]
\centering
\small
\begin{tcolorbox}[
  colback=casered!6,
  colframe=casered!100,
  coltitle=white,
  fonttitle=\bfseries,
  title={AIME 2026 \#21 Find the sum of all $r$ such that the circle of radius $r$ centered at $(4, 39)$ is tangent to $2y = x^2 - 8x + 12$. \hfill Ground Truth: 50},
  boxrule=0.8pt,
  arc=2pt,
  left=6pt, right=6pt, top=4pt, bottom=4pt
]


\textbf{Shared setup.} All 32 samples define the squared-distance function $f(x)$ and find two critical values: $f{=}81$ (at $x{=}4{\pm}4\sqrt{5}$) and $f{=}1681$ (at $x{=}4$), corresponding to tangency radii $r{=}9$ and $r{=}41$. The correct answer requires recognizing \emph{both} as valid: $9+41=50$. Samples diverge at how they interpret these critical values:\\[6pt]

\textbf{Answer = 9} {\small\textcolor{gray}{(9/32)}} --- Keeps only $r{=}9$, discards $r{=}41$. Treats the global minimum as the \emph{only} tangency.\\[1pt]
\colorbox{casered!10}{\textit{``The \textbf{minimum value} of $f(x)$ is 81\,...\,So $r = \sqrt{81} = \boxed{9}$.’’}}\\[6pt]

\textbf{Answer = 41} {\small\textcolor{gray}{(4/32)}} --- Keeps only $r{=}41$, discards $r{=}9$. Misidentifies $f(4){=}1681$ as the global minimum.\\[1pt]
\colorbox{casered!10}{\textit{``The minimum value of $f(x)$ is $1681$\,...\,the only tangent radius is $r{=}\boxed{41}$.’’}}\\[6pt]

\textbf{Answer = 0} {\small\textcolor{gray}{(6/32)}} --- Identifies only \emph{one} radius (either 9 or 41) but allows $r<0$, so $+r$ and $-r$ cancel to 0.\\[1pt]
\colorbox{casered!10}{\textit{``Since $r^2{=}81$ implies $r{=}\pm 9$\,...\,sum $= 9 + (-9) = \boxed{0}$.’’}}

\end{tcolorbox}

\caption{Case study on Qwen3-8B (AIME 2026 \#21, 32 rollouts). The three dominant wrong answers (9, 41, and 0) each correspond to a distinct misinterpretation of the two critical values of $f(x)$, and trajectories converging on the same incorrect answer follow near-identical reasoning paths. This illustrates that the diversity of final wrong answers within a group serves as a reliable proxy for the diversity of underlying reasoning failures, motivating EDAS's use of answer-level partitioning.}
\label{fig:case_study}
\end{figure}

We validate EDAS on mathematical reasoning and code generation tasks, integrating it into both GRPO~\cite{shao2024deepseekmath} and DAPO~\cite{yu2025dapo} and comparing against several baselines. EDAS consistently yields significant improvements, with an average gain of 6.29 points over DAPO on Qwen3-8B across seven math benchmarks and 1.45 points on code generation. The key contributions of this work are as follows:

\begin{itemize}
   \item \textbf{Empirical Insight into Error Diversity:} We identify intra-group error distribution as a critical, yet previously underexplored, determinant of training efficacy in RLVR. Our analysis reveals that environments eliciting diverse reasoning failures yield substantially richer learning signals, whereas homogeneous error distributions severely bottleneck policy optimization.

   \item \textbf{Error Diversity Advantage Shaping:} We propose a lightweight and algorithm-agnostic framework that dynamically calibrates advantage signals based on information-theoretic error diversity. By amplifying penalties for dominant, entrenched failures and attenuating them for rare errors, EDAS discourages error perseveration and incentivizes the  exploration of alternative reasoning trajectories.

   \item \textbf{Comprehensive Empirical Validation:} Extensive evaluations demonstrate the robustness of EDAS across models of different scales and two foundational RL algorithms on math reasoning and code generation tasks. Notably, our method establishes consistent performance gains, yielding an average absolute improvement of 6.29 points on math reasoning and 1.45 points on code generation over the SOTA baseline.
\end{itemize}

\begin{figure}[H]
    \centering
    \begin{subfigure}{0.32\textwidth}
        \centering
        \includegraphics[width=\linewidth]{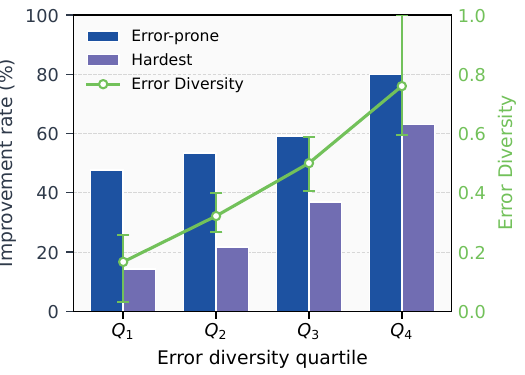}
        \caption{}
        \label{fig:uprate}
    \end{subfigure}
    \hfill
    \begin{subfigure}{0.32\textwidth}
         \centering
         \includegraphics[width=\linewidth]{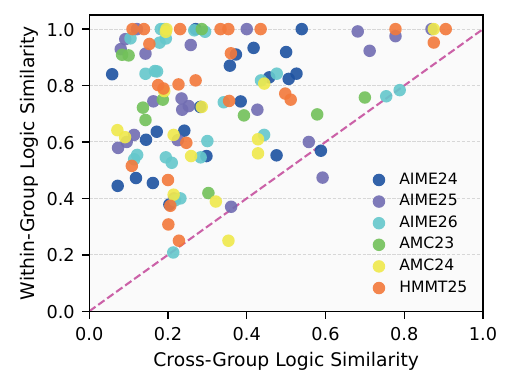}
         \caption{}
         \label{fig:logic_similarity}
     \end{subfigure}
         \hfill
    \begin{subfigure}{0.32\textwidth}
         \centering
         \includegraphics[width=\linewidth]{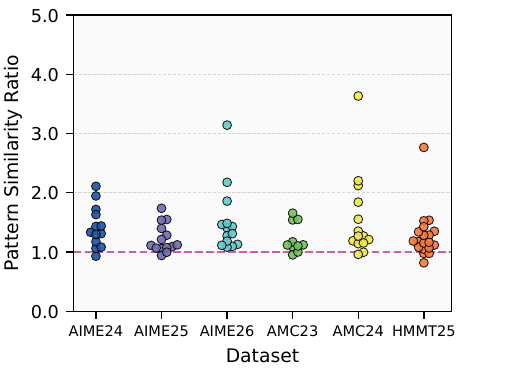}
         \caption{}
         \label{fig:sim_ratio}
     \end{subfigure}
    \caption{\small \textbf{(a) Error diversity predicts RLVR effectiveness.} We find that Improvement Rate---the ratio of problems showing increased Average Pass Rate (APR)---positively correlates with initial error-set diversity. We collect the error-prone set (initial APR < 100\%) and the hardest subsets (initial APR = 0\%), and partition them equally into four quartiles according to error diversity. The results suggest that higher error diversity provides a richer learning signal for RLVR. See \autoref{ap:des4moti} for details. \textbf{(b) Shared wrong answers indicate shared reasoning roots.} We compare the probability of sharing a fundamental logical error between trajectories yielding the same wrong answer ($y$-axis) versus different wrong answers ($x$-axis). With 94\% of evaluated problems lying above the $y=x$ diagonal, the results demonstrate that identical incorrect answers predominantly stem from similar reasoning paths, whereas differing wrong answers reflect distinct logical failures. See \autoref{ap:des4moti2} for details. \textbf{(c) Shared wrong answers indicate shared language pattern.}  The ratio is calculated as within-group similarity divided by between-group similarity for each problem. Points above the dashed line indicate that models generate more consistent 3-gram language pattern when producing the same incorrect answer. This systematic trend persists across all tested AIME, AMC, and HMMT benchmarks.}\label{fig:motivation}
    \vspace{-0.5em}
\end{figure}

\section{Motivation}\label{sec:motivation}

In this section, we present empirical evidence that the diversity of incorrect answers within a rollout group is a strong predictor of RLVR training effectiveness, and validate that distinct wrong answers genuinely reflect distinct underlying reasoning paths.

\subsection{Error Diversity Predicts RLVR Effectiveness}

Our empirical investigation reveals a striking pattern (\autoref{fig:uprate}): the \textbf{diversity of wrong answers within a rollout group} is a strong predictor of whether RLVR training will improve the model’s performance on that problem---problems with diverse errors exhibit substantially higher improvement rates, whereas those with homogeneous failures see limited gains.

This observation rests on a key premise: that distinct wrong answers genuinely reflect distinct underlying reasoning paths, rather than mere surface variation. We provide converging evidence from two independent angles. Using an LLM judge to assess logical roots, \autoref{fig:logic_similarity} shows that trajectories yielding the same wrong answer predominantly share the same fundamental reasoning error, while those yielding different wrong answers reflect distinct logical failures. This is independently confirmed at the language pattern level: \autoref{fig:sim_ratio} shows trajectories with the same incorrect answer exhibit significantly higher 3-gram similarity in their reasoning steps than those with different answers. A detailed case in \autoref{fig:case_study} further establishes that wrong answer diversity is a reliable proxy for reasoning-path diversity, explaining why it predicts RLVR effectiveness and motivating the exploitation of intra-group error structure.

\subsection{From Observation to Method}

The above findings suggest that the intra-group error distribution carries rich, actionable information that current RLVR algorithms discard. A natural strategy is to reshape the advantage signal so that dominant, repeated errors receive amplified penalties---pushing the model away from entrenched failure modes---while rare, exploratory errors receive attenuated penalties, preserving the model’s diverse reasoning attempts. We formalize this intuition as \textbf{Error Diversity Advantage Shaping (EDAS)} in the next section.

\section{Method}

In this section, we describe Error Diversity Advantage Shaping (EDAS), a lightweight advantage adjustment module that can be applied on top of any group-relative RLVR algorithm. EDAS modifies only the advantage values assigned to incorrect trajectories; correct trajectories and the rest of the training pipeline remain untouched.

\begin{figure}[t]
\centering
\includegraphics[width=0.84\linewidth]{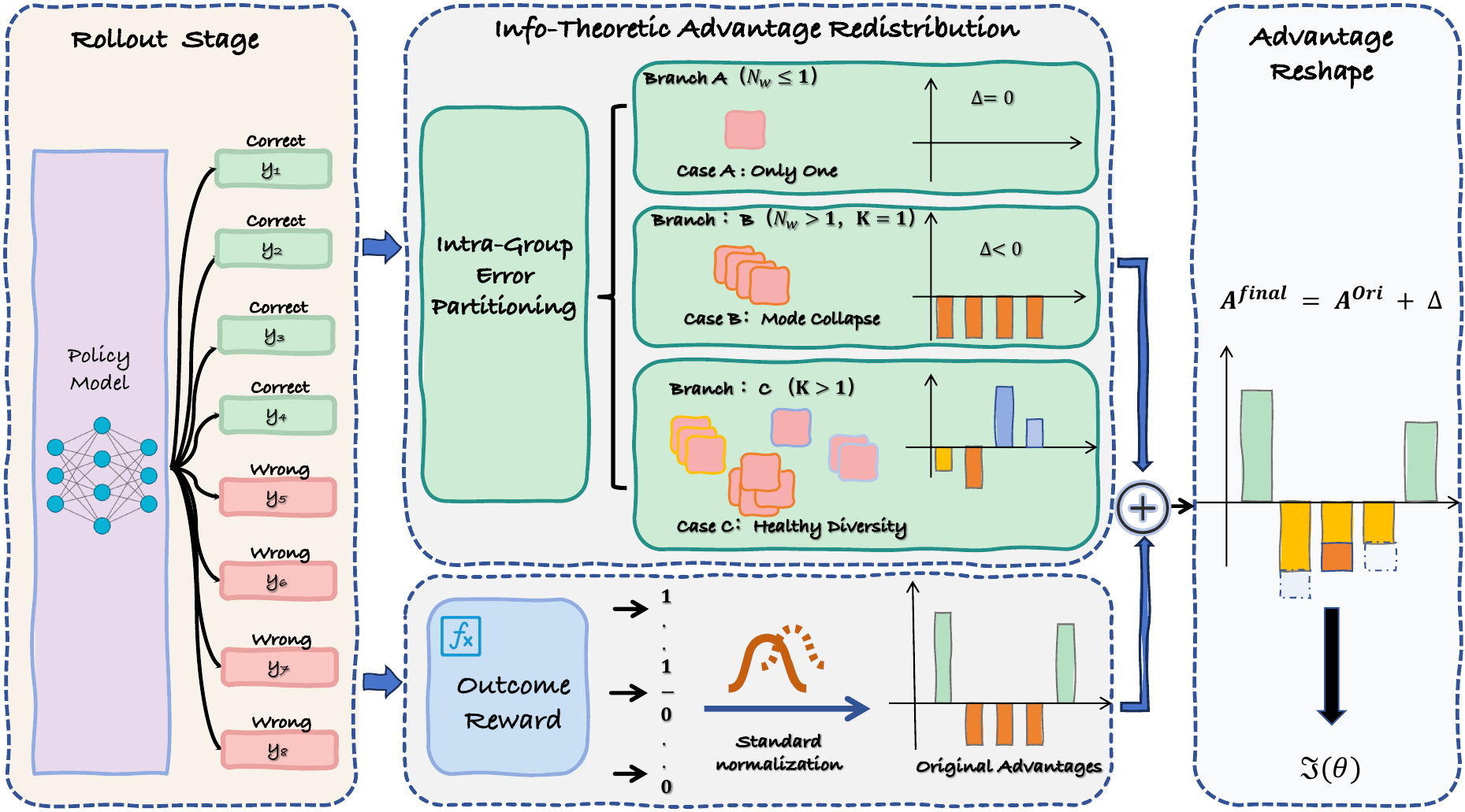}
\caption{Overview of EDAS. \textbf{(Left)} The policy model rolls out a group of trajectories for problem $x$, labeled correct/incorrect by the outcome reward. \textbf{(Middle)} Incorrect trajectories are partitioned into error classes by a domain-specific function $\mathcal{E}(\cdot)$, and an adjustment $\Delta$ is computed along three branches: (A) $N_w\!\le\!1$, $\Delta=0$; (B) mode collapse ($K\!=\!1$), uniform negative shift; (C) healthy diversity ($K\!>\!1$), redistribute penalties by normalized relative surprisal. \textbf{(Right)} The final advantage $A^{\mathrm{final}}=A^{\mathrm{ori}}+\Delta$ preserves the baseline's sign and intra-subset ordering, and feeds the policy gradient objective $\mathcal{J}(\theta)$.}
\label{fig:framework}
\end{figure}

\subsection{Intra-Group Error Partitioning}\label{sec:partition}

Let $\mathcal{W} \subseteq \{1, \dots, N\}$ denote the index set of trajectories that yield incorrect answers, with $N_w = |\mathcal{W}|$. Rather than committing to a fixed notion of error similarity, EDAS is built around a \textbf{domain-specific error equivalence function} $\mathcal{E}: \mathcal{Y} \to \mathcal{L}$, which maps each trajectory $y_i \in \mathcal{Y}$ to a discrete \emph{error label} $\ell_i = \mathcal{E}(y_i)$ drawn from a label set $\mathcal{L}$. Two incorrect trajectories are treated as belonging to the same error class if and only if they share the same label:
\[
    y_i \sim y_j \;\iff\; \mathcal{E}(y_i) = \mathcal{E}(y_j).
\]
The choice of $\mathcal{E}$ is the sole domain-specific design decision in EDAS; all downstream computation---entropy estimation, advantage redistribution, and clipping---is identical regardless of domain. Concretely, we describe the designs for the two most common reasoning scenarios:

\begin{itemize}
    \item \textbf{Mathematical reasoning.} $\mathcal{E}(y_i)$ extracts the canonicalized numerical answer from the \texttt{\textbackslash boxed\{\}} environment and applies symbolic normalization via the Math-Ruler library. Two trajectories are co-classified only if their extracted values are provably numerically equivalent. We show in \autoref{ap:embedding_failure} that embedding-based alternatives~\cite{chen2025dra,li2026setpo,wei2026mmr} are fundamentally unsuitable for this purpose.
    \item \textbf{Code generation.} $\mathcal{E}(y_i)$ maps each incorrect trajectory to its Python exception type (e.g., \texttt{SyntaxError}, \texttt{TypeError}, \texttt{WrongAnswer}), obtained by compiling and executing the generated code in a sandboxed environment against test cases (detailed in \autoref{sec:code_gen}). Exception types provide a natural semantic proxy for the underlying algorithmic failure mode.
\end{itemize}

This relation partitions the set of incorrect trajectories into $K$ disjoint equivalence classes (i.e., unique error types), denoted as $\mathcal{C} = \{C_1, C_2, \dots, C_K\}$, such that $\sum_{k=1}^K |C_k| = N_w$. For each class $C_k$, we define its empirical intra-group probability as $p_k = |C_k| / N_w$; for any trajectory $i \in C_k$, we use $p_k$ as its class probability.

To ensure that EDAS adjustments scale proportionally with the current learning dynamics---avoiding over-adjustment when advantages are small (e.g., early training) or under-adjustment when they are large---we define a dynamic scaling factor $S$ as the mean absolute advantage of the incorrect subset:

\begin{equation}
    S = \frac{1}{N_w} \sum_{i \in \mathcal{W}} \left| A^{\mathrm{orig}}_i \right|
\end{equation}

\subsection{Information-Theoretic Advantage Redistribution}

The core idea of EDAS is to penalize homogeneous failure modes while attenuating penalties for diverse, exploratory errors. We formalize this using the self-information of each trajectory $I_i = -\ln p_k$ where $C_k$ is the class containing $y_i$ and the empirical Shannon entropy of the group's error distribution $H = -\sum_{k=1}^K \frac{|C_k|}{N_w} \ln \left( \frac{|C_k|}{N_w} \right)$.

Depending on the cardinality of the error partition $K$, we formulate the advantage adjustment $\Delta_i$ for each $i \in \mathcal{W}$ as a piecewise function:

\begin{equation}
    \Delta_i =
\begin{cases}
0, & \text{if } N_w \le 1 \\
  -\beta \cdot S, & \text{if } N_w > 1 \text{ and } K = 1 \\
 \alpha \cdot S \cdot \left( \frac{I_i - H}{\ln N_w} \right), & \text{if } K > 1
\end{cases}
\end{equation}

Here, $\alpha, \beta > 0$ control the strength of diversity encouragement and collapse penalty.

The three branches capture distinct regimes:

\begin{itemize}
  \item \textbf{Insufficient Statistics ($N_w \le 1$)}: Too few errors to estimate diversity; the original advantage is preserved.
  \item \textbf{Error Perseveration ($K = 1$)}: All wrong answers are
  identical---the policy concentrates all errors on a single answer.
  EDAS applies a uniform negative shift $-\beta \cdot S$ to all
  incorrect trajectories, amplifying their negative advantage. This
  increases the magnitude of log-probability decrease for these tokens
  during the policy gradient update, accelerating the model's departure
  from the entrenched error and freeing probability mass for alternative
  reasoning paths.
  \item \textbf{Healthy Diversity ($K > 1$)}: Multiple error types coexist. EDAS redistributes penalties based on normalized relative surprisal $T_i = (I_i - H) / \ln N_w$, which is bounded in $[-1, 1]$ (see Appendix~\ref{ap:proof}). Frequent errors ($T_i < 0$) receive amplified penalties; rare errors ($T_i > 0$) receive attenuated penalties.
\end{itemize}

In the healthy diversity regime, this redistribution satisfies a strict zero-sum constraint: $\sum_{i \in \mathcal{W}} T_i = 0$ as proved in \autoref{ap:proof}. Consequently, EDAS reallocates the advantage signals without altering their arithmetic mean, preserving the unbiased nature of the base estimator. We summarize the advantage redistribution of EDAS in Algorithm \ref{alg:edpo_subroutine}.

\begin{algorithm}[t]
\caption{EDAS Advantage Redistribution Subroutine (ARS)}
\label{alg:edpo_subroutine}
\begin{algorithmic}[1]
\Require Incorrect trajectories subset $\mathcal{W}$, Baseline advantages $\{A^{\mathrm{orig}}_i\}_{i \in \mathcal{W}}$.
\Require Hyperparameters: $\alpha$ (diversity), $\beta$ (collapse penalty).
\Ensure Advantage adjustments $\{\Delta_i\}_{i \in \mathcal{W}}$.
\State $N_w \gets |\mathcal{W}|$
\If{$N_w \le 1$}
    \State \Return $\Delta_i = 0 \quad \forall i \in \mathcal{W}$ \Comment{Branch A: Insufficient samples}
\EndIf
\State Compute dynamic scale $S \gets \frac{1}{N_w} \sum_{i \in \mathcal{W}} |A^{\mathrm{orig}}_i|$
\State Partition $\mathcal{W}$ into $K$ equivalence classes $\mathcal{C} = \{C_1, \dots, C_K\}$
\If{$K = 1$}
    \State \Return $\Delta_i = - \beta \cdot S \quad \forall i \in \mathcal{W}$ \Comment{Branch B: Error perseveration penalty}
\Else
    \State Compute empirical probabilities $p_k \gets |C_k| / N_w$ for $k = 1, \dots, K$
    \State Compute subset entropy $H \gets - \sum_{k=1}^K p_k \ln p_k$
    \For{each error trajectory $i \in \mathcal{W}$} \Comment{Branch C: Healthy diversity}
        \State Find class $C_k$ such that $i \in C_k$
        \State Compute self-information $I_i \gets -\ln p_k$
        \State Calculate normalized relative surprisal $T_i \gets (I_i - H) / \ln N_w$
        \State $\Delta_i \gets \alpha \cdot S \cdot T_i$
    \EndFor
    \State \Return $\{\Delta_i\}_{i \in \mathcal{W}}$
\EndIf
\end{algorithmic}
\end{algorithm}

\subsection{Monotonicity-Preserving Clipping}\label{sec:clipping}

To prevent a positive adjustment from inverting the sign of $A^{\mathrm{orig}}_i$, which would falsely reward an incorrect trajectory, we clip the adjustment magnitude to $|A^{\mathrm{orig}}_i| / \kappa$ for a margin $\kappa > 1$:

\begin{equation}
    A^{\mathrm{final}}_i = A^{\mathrm{orig}}_i + \text{sgn}(\Delta_i) \cdot \min \left( |\Delta_i|, \frac{|A^{\mathrm{orig}}_i|}{\kappa} \right)
\end{equation}

This clipping mechanism guarantees no sign inversion of the advantage for incorrect trajectories, ensuring they always receive non-positive gradient updates, with the full formal proof provided in \autoref{ap:Clipping}.



\FloatBarrier
\section{Experiment}




In this section, we evaluate the effectiveness of EDAS across two challenging domains: mathematical reasoning and code generation. We aim to examine whether EDAS consistently improves RLVR performance across different model scales and task types.

\subsection{Experimental Setup}

\paragraph{Model.}

To evaluate the scalability and robustness of EDAS, we employ the Qwen3 series \cite{yang2025qwen3} as our base models, including Qwen3-8B, Qwen3-4B, and Qwen3-4B-Base. These models represent a diverse range of pre-existing reasoning proficiencies and parameter scales.

\paragraph{Implementation Details.}

We evaluate our method on mathematical reasoning and code generation tasks. Experiments were executed on a system utilizing 8× NVIDIA H200 GPUs. All models are trained under a standard RLVR framework where the reward is derived solely from the correctness of the final output. We maintain a consistent rollout configuration across both domains, utilizing a rollout batch size of 256 with 10 samples per prompt. The EDAS hyperparameters are fixed at $\alpha=0.4, \beta=0.2, \kappa=2.0$ for all experiments. For the mathematical reasoning tasks, we evaluate performance across seven mainstream benchmarks, including AIME (2024, 2025, 2026)~\cite{aime2024,aime2025,aime2026}, AMC (2023, 2024)~\cite{AMC}, HMMT 2025~\cite{balunovic2025matharena}, and OlympiadBench~\cite{he2024olympiadbench}. To assess code generation capabilities, we utilize four representative benchmarks: LiveCodeBench~\cite{jain2024livecodebench}, Codeforces~\cite{quan2025codeelo}, HumanEval+~\cite{chen2021evaluating,liu2023your}, and MBPP+~\cite{austin2021program,liu2023your}. Detailed task-specific configurations, including dataset statistics, training epochs, and evaluation settings, are provided in \autoref{ap:exp_det}.

\paragraph{Baselines.}

We compare EDAS against the standard RLVR algorithm GRPO~\cite{shao2024deepseekmath} and its comparable variant DAPO~\cite{yu2025dapo}. We also include PKPO~\cite{pkpo} as a baseline, which attempts to directly optimize the Pass@$k$ metric by theoretically leveraging intra-group information inherent in RLVR. Furthermore, we compare our method with entropy-based~\cite{cheng2026reasoning} regularization techniques (En Adv) that promote model exploration by augmenting the reward signal. Notably, as entropy-based exploration encountered training instability on larger models, we report its peak performance sampled at 10-step intervals to ensure a fair comparison. Detailed hyperparameter configurations for these additional baselines are provided in \autoref{ap:base_setup}.



\begin{figure}[t]
    \centering
    \includegraphics[width=\linewidth]{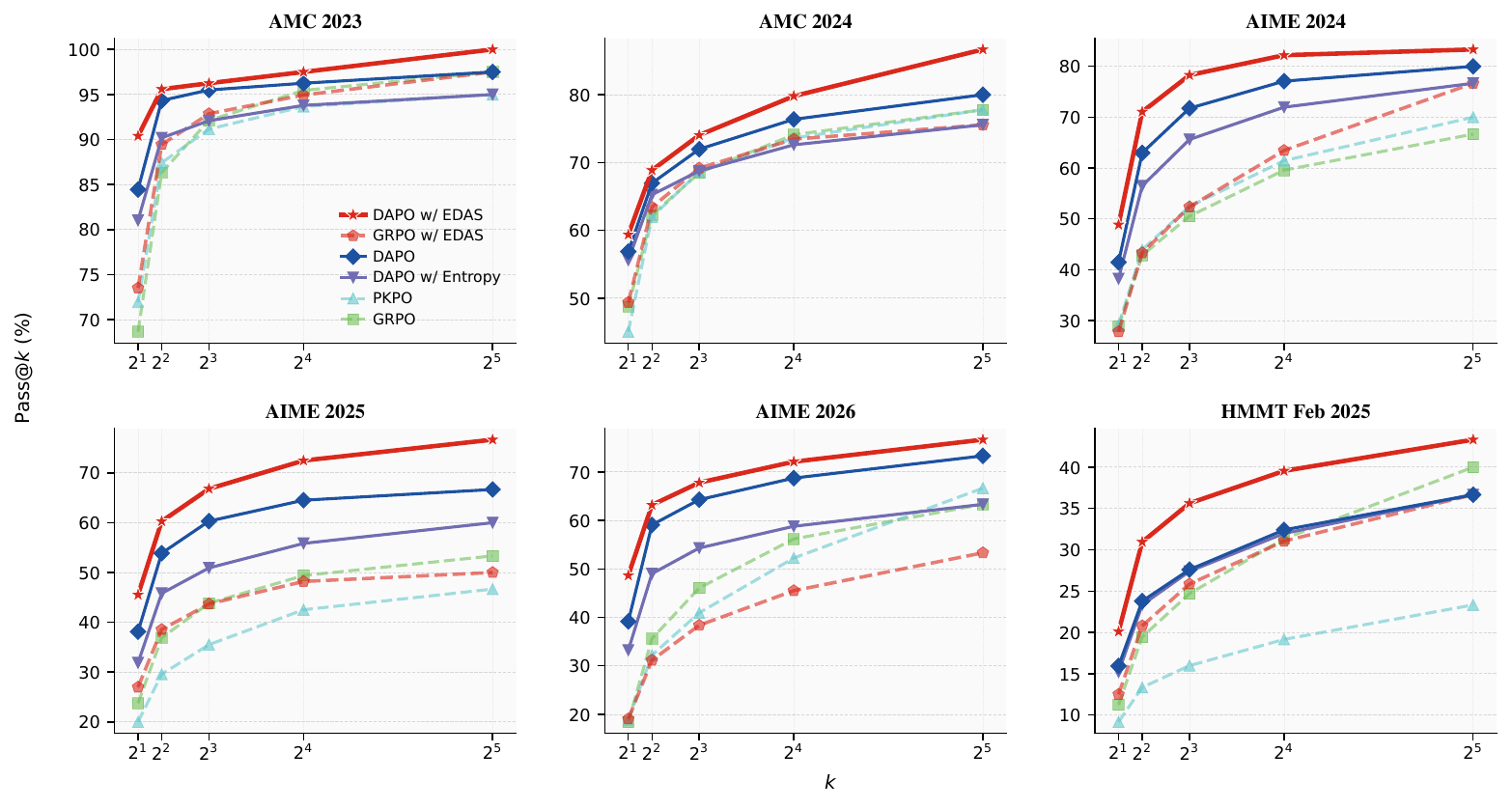}
    \caption{\small Pass@$k$ curves on six benchmarks for Qwen3-8B. EDAS w/ DAPO consistently achieves the highest Pass@$k$ across all values of $k$. The advantage is especially pronounced at larger $k$ on harder benchmarks, where diversity in reasoning paths directly translates to higher coverage of correct solutions.}
    \label{fig:passk}
\end{figure}

\subsection{Math Reasoning}

\begin{table}[t]
\centering
\caption{\small \textbf{Main Results on Math Benchmarks.} Comparison of our \textbf{EDAS} enhancement across three Qwen3 models. $\Delta$ denotes the improvement of our best variant (\textbf{EDAS w/ DAPO}) over the strongest existing baseline. The best results are \textbf{bolded}, and our methods are highlighted in \colorbox{lightred}{light red}. Due to the significant increase in inference demands resulting from the use of Dynamic Sampling, we evaluate DAPO as a separate comparison.}
\label{tab:edpo_full_results}
\resizebox{\textwidth}{!}{%
\begin{tabular}{l@{\hspace{12pt}}cccccccc}
\toprule
\textbf{Method} & \textbf{AMC23} & \textbf{AMC24} & \textbf{AIME24} & \textbf{AIME25} & \textbf{AIME26} & \textbf{HMMT} & \textbf{Olympiad} & \textbf{Avg.} \\
\midrule
\rowcolor[gray]{.95} \multicolumn{9}{l}{\textit{Qwen3-8B}} \\
Baseline & 67.03 & 44.24 & 23.85 & 20.00 & 15.10 & 10.42 & 54.75 & 33.63 \\
+ GRPO   & 68.67 & 48.75 & 28.85 & 23.75 & 18.43 & 11.25 & 57.57 & 36.75 \\
+ PKPO   & 71.95 & 45.07 & \textbf{29.16} & 20.00 & 18.85 & 12.08   & 58.90 & 36.57 \\
\rowcolor{lightred} + \textbf{GRPO w/ EDAS} & \textbf{73.52} & \textbf{49.38} & 27.81 & \textbf{26.98} & \textbf{19.06} & \textbf{12.50} & \textbf{59.20} & \textbf{38.35} \\
{$\Delta$} & \textcolor[HTML]{006400}{+1.57} & \textcolor[HTML]{006400}{+0.63} & \textcolor[HTML]{B22222}{-1.35} & \textcolor[HTML]{006400}{+3.23} & \textcolor[HTML]{006400}{+0.21} & \textcolor[HTML]{006400}{+0.42} & \textcolor[HTML]{006400}{+0.30} & \textcolor[HTML]{006400}{+1.60} \\

\lightrule
+ DAPO   & 84.45 & 56.88 & 41.46 & 38.12 & 39.17 & 15.94 & 58.75 & 47.82 \\
+ DAPO w/ En Adv & 81.02 & 55.62 & 31.81 & 38.23 & 33.23 & 15.21 & 58.31 & 44.78 \\
\rowcolor{lightred} + \textbf{DAPO w/ EDAS} & \textbf{90.39} & \textbf{59.38} & \textbf{48.85} & \textbf{45.52} & \textbf{48.65} & \textbf{20.10} & \textbf{65.88} & \textbf{54.11} \\
{$\Delta$} & \textcolor[HTML]{006400}{+5.94} & \textcolor[HTML]{006400}{+2.50} & \textcolor[HTML]{006400}{+7.39} & \textcolor[HTML]{006400}{+7.29} & \textcolor[HTML]{006400}{+9.48} & \textcolor[HTML]{006400}{+4.16} & \textcolor[HTML]{006400}{+7.13} & \textcolor[HTML]{006400}{+6.29} \\

\midrule
\rowcolor[gray]{.95} \multicolumn{9}{l}{\textit{Qwen3-4B}} \\
Baseline & 67.03 & 44.31 & 22.08 & 20.01 & 18.02 & 10.83 & 53.41 & 33.68 \\
+ DAPO   & 85.70 & 57.15 & \textbf{47.50} & 43.54 & 40.73 & 22.40 & 65.88 & 51.84 \\
+ DAPO w/ En Adv & 77.27 & 49.58 & 34.58 & 31.87 & 26.77 & 13.65 & 59.35 & 41.87 \\
\rowcolor{lightred} + \textbf{DAPO w/ EDAS} & \textbf{88.59} & \textbf{58.19} & 46.46 & \textbf{48.12} & \textbf{44.27} & \textbf{23.12} & \textbf{67.36} & \textbf{53.73} \\
{$\Delta$} & \textcolor[HTML]{006400}{+2.89} & \textcolor[HTML]{006400}{+1.04} & \textcolor[HTML]{B22222}{-1.04} & \textcolor[HTML]{006400}{+4.58} & \textcolor[HTML]{006400}{+3.54} & \textcolor[HTML]{006400}{+0.72} & \textcolor[HTML]{006400}{+1.48} & \textcolor[HTML]{006400}{+1.89} \\

\midrule
\rowcolor[gray]{.95} \multicolumn{9}{l}{\textit{Qwen3-4B-Base}} \\
Baseline & 31.56 & 12.08 & 10.31 & 5.52 & 2.60 & 1.56 & 26.85 & 12.93 \\
+ DAPO   & 60.00 & 33.68 & 12.81 & 18.02 & 10.73 & 8.33 & 46.44 & 27.14 \\
+ DAPO w/ En Adv & 55.08 & 34.24 & 16.98 & 17.40 & 12.71 & 6.67 & 45.25 & 26.90 \\
\rowcolor{lightred} + \textbf{DAPO w/ EDAS} & \textbf{72.73} & \textbf{48.68} & \textbf{24.17} & \textbf{25.21} & \textbf{20.97} & \textbf{10.94} & \textbf{56.50} & \textbf{37.02} \\
{$\Delta$} & \textcolor[HTML]{006400}{+12.73} & \textcolor[HTML]{006400}{+14.44} & \textcolor[HTML]{006400}{+7.19} & \textcolor[HTML]{006400}{+7.19} & \textcolor[HTML]{006400}{+8.26} & \textcolor[HTML]{006400}{+2.61} & \textcolor[HTML]{006400}{+10.06} & \textcolor[HTML]{006400}{+9.88} \\
\bottomrule
\end{tabular}%
}
\end{table}

\paragraph{Results.}

As illustrated in \autoref{tab:edpo_full_results}, applying EDAS consistently improves reasoning capabilities across all three models when combined with mainstream reinforcement learning algorithms. Since DAPO employs Dynamic Sampling—which filters out samples where the model is entirely correct or entirely incorrect and triggers re-sampling—it consumes significantly more computational resources for the same number of steps. Dynamic Sampling also plays a critical role in EDAS, as discussed in \autoref{ap:Clipping}. Consequently, we conduct separate comparisons based on the use of Dynamic Sampling. Over the strongest baseline (DAPO), EDAS achieves average improvements of \textbf{6.29} points on Qwen3-8B, \textbf{1.89} points on Qwen3-4B, and \textbf{9.88} points on Qwen3-4B-Base across seven benchmarks, demonstrating the substantial effectiveness of leveraging error diversity information in group rollouts.

\paragraph{Pass@$k$ Results.}
Beyond average accuracy, we evaluate Pass@$k$ to measure the model's coverage of correct solutions across multiple attempts. \autoref{fig:passk} reports Pass@$k$ for $k \in \{2, 4, 8, 16, 32\}$ on six benchmarks using Qwen3-8B. EDAS w/ DAPO consistently dominates all baselines across all values of $k$, confirming that the improvement is not limited to greedy decoding but extends to the model's ability to produce at least one correct solution among $k$ attempts. Notably, the gap between EDAS and DAPO widens as $k$ increases on harder benchmarks (AIME 2025, AIME 2026, HMMT), suggesting that EDAS's diversity-promoting mechanism enriches the pool of distinct reasoning paths, making it increasingly likely that at least one succeeds. EDAS w/ GRPO similarly outperforms both GRPO and PKPO, further confirming the generality of the approach.

\subsection{Code Generation}\label{sec:code_gen}

To demonstrate that EDAS generalizes beyond mathematical reasoning, we evaluate it on code generation tasks. The key adaptation lies in the definition of the equivalence relation for error partitioning: instead of grouping incorrect responses by their final numerical answer, we group them by Python exception type, the implementation details of which are provided in \autoref{ap:code_gene}.

\paragraph{Results.}

As shown in \autoref{tab:code_results}, EDAS consistently improves over DAPO across all four benchmarks, with an average gain of \textbf{1.45} points. The improvement is most pronounced on Codeforces, which contains algorithmically challenging problems where diverse error exploration is particularly valuable---a model stuck on a single algorithmic approach benefits from EDAS pushing it toward alternative strategies that may yield different failure modes but also open paths to correct solutions.

These results confirm two important properties of EDAS: (1) the core principle---that error diversity is an informative signal for RLVR---extends beyond mathematical reasoning to code generation, and (2) the framework is flexible enough to accommodate domain-specific definitions of error equivalence while preserving the same advantage redistribution mechanism.

\begin{table}[htbp]
\centering
\caption{\small Code generation results on Qwen3-4B. $\Delta$ denotes the improvement of EDAS over DAPO.}
\label{tab:code_results}
\begin{tabular}{lccccc}
\toprule
\textbf{Method} & \textbf{LiveCodeBench} & \textbf{Codeforces} & \textbf{HumanEval+} & \textbf{MBPP+} & \textbf{Avg.} \\
\midrule
+ DAPO & 31.21 & 42.42 & 79.88 & 63.23 & 54.19 \\
\rowcolor{lightred} + \textbf{DAPO w/ EDAS} & \textbf{32.07} & \textbf{45.97} & \textbf{80.49} & \textbf{64.02} & \textbf{55.64} \\
{$\Delta$} & \textcolor[HTML]{006400}{+0.86} & \textcolor[HTML]{006400}{+3.55} & \textcolor[HTML]{006400}{+0.61} & \textcolor[HTML]{006400}{+0.79} & \textcolor[HTML]{006400}{+1.45} \\
\bottomrule
\end{tabular}
\end{table}

\FloatBarrier
\section{Analysis}


\paragraph{EDAS Promotes Error Diversity and Enhances RLVR Performance.}

\autoref{fig:meanerr} shows that integrating EDAS into DAPO sustains a higher variety of unique wrong answers during training. Furthermore, EDAS consistently achieves superior reward levels as shown in \autoref{fig:reward}. These dynamics indicate that EDAS prevents premature convergence to narrow failure modes, encouraging the exploration of diverse reasoning paths and ultimately boosting overall RLVR efficacy.

\begin{figure}[htbp]
    \centering
    \hfill
    \begin{subfigure}{0.49\textwidth}
        \centering
        \includegraphics[width=\linewidth]{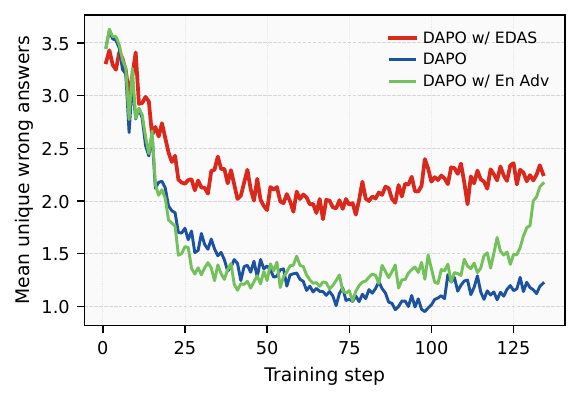}
        \caption{}
        \label{fig:meanerr}
    \end{subfigure}
    \hfill
    \begin{subfigure}{0.49\textwidth}
        \centering
        \includegraphics[width=\linewidth]{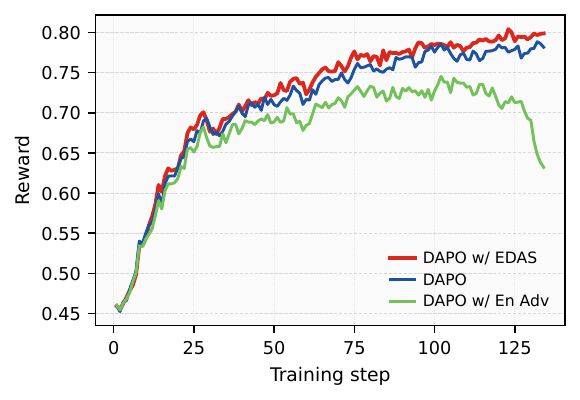}
        \caption{}
        \label{fig:reward}
    \end{subfigure}
    \caption{\small (a) EDAS preserves error diversity throughout training. The trajectories of mean unique wrong answers show that while DAPO's error diversity drops rapidly, applying EDAS maintains a higher variety of failure modes, underpinning enhanced capability to discover successful reasoning paths. (b) EDAS improves optimization dynamics. Reward curves (0.5 exponential smoothing) show that EDAS achieves a higher asymptotic reward and faster growth than DAPO and DAPO w/ Entropy Advantage.}
    \label{fig:training_dynamics}
\end{figure}



\paragraph{EDAS Facilitates Breakthroughs on Capability-Frontier Problems.}

EDAS further demonstrates its utility on capability-frontier problems. Solving these instances requires the discovery of genuinely novel reasoning paths through RL. \autoref{tab:bottleneck_8b} summarizes the "breakthrough" rates for Qwen3-8B; across four benchmarks, 64 problems are identified as hard. EDAS successfully resolves 30 of these, compared to 22 for DAPO, representing a 36\% relative improvement. Notably, EDAS uniquely solves 10 problems that DAPO fails to crack entirely (\text{APR} = 0\%). Detailed analysis is provided in \autoref{ap:bottleneck}.

\begin{table}[htbp]
\centering
\caption{\small \textbf{Bottleneck-breaking analysis on Qwen3-8B.} \emph{Hard} denotes problems where the base model achieved zero accuracy over 32 rollouts. \emph{Breakthrough} indicates the number of such problems solved ($\text{APR} > 0$) after RL training. \emph{EDAS-only} highlights the unique capacity of our method to solve instances that remains unsolved by DAPO.}
\label{tab:bottleneck_8b}
\small 
\begin{tabular}{l@{\hspace{20pt}}cccc}
\toprule
\textbf{Benchmark} & \textbf{Hard (Total)} & \textbf{DAPO Break} &  \textbf{EDAS Break} & \textbf{EDAS-only} \\
\midrule
\rowcolor[gray]{.95} \multicolumn{5}{l}{\textit{Qwen3-8B (Zero-Shot Hard Subset)}} \\
AIME 2025    & 15 &  6 & \textbf{9} & 3 \\
AIME 2026    & 15 &  7 & \textbf{8} & 1 \\
AMC 2024      & 13 &  5 & \textbf{7} & 3 \\
HMMT Feb 25  & 21 &  4 & \textbf{6} & 3 \\
\midrule
\textbf{Total Sum} & \textbf{64} & \textbf{22} & \textbf{30} & \textbf{10} \\
\textbf{Success Rate} & 0.0\% & 34.4\% & \textbf{46.9\%} & \textcolor[HTML]{006400}{+12.5\%} \\
\bottomrule
\end{tabular}
\end{table}


\vspace{-2mm}

\FloatBarrier
\section{Related Work}
\paragraph{Group-Level Information in Rollouts.}
Recent RLVR research~\cite{xie2026step} increasingly exploits the structured information within rollout groups. PKPO~\cite{pkpo} optimizes Pass@$k$ coverage via low-variance gradient estimators, while RSPO~\cite{zhang2025rspo} and Harder-is-Better~\cite{dai2026harder} reweight gradients based on trajectory rarity or problem difficulty. To enhance set-level dynamics, SetPO~\cite{li2026setpo} introduces marginal contribution bonuses, and NGRPO~\cite{nan2025ngrpo} addresses gradient vanishing in homogeneous failed groups by injecting negative signals. Distinct from these, EDAS is motivated by a novel empirical observation: the diversity of incorrect answers within a group is a strong predictor of training success, as distinct errors reflect broader exploration across reasoning paths.

\paragraph{Diversity of Reasoning Paths in RLVR.}
To combat policy entropy collapse, advantage-shaping methods such as UCAS~\cite{xie2025unlocking}, EDGE-GRPO~\cite{zhang2025edge}, and ProGRPO~\cite{bu2025post} modulate rewards to penalize overconfidence or utilize internal probability structures. Alternatively, structural approaches intervene via tree-structured branching~\cite{zhao2026reinforced}, noise embeddings~\cite{lyudiversity}, or direct CoT entropy integration~\cite{hao2025reasoning}. Unlike these methods that modify sampling, architectures, or the full distribution, EDAS provides a post-hoc, complementary adjustment. It targets the error structure specifically, using the empirical surprisal of incorrect trajectories to differentially reweight advantages—a lightweight mechanism that leaves all other training components unchanged.

\vspace{-3mm}
\section{Conclusion}

Our empirical analysis shows that intra-group error diversity is strongly correlated with RLVR effectiveness, suggesting that diverse failure modes can provide useful learning signals beyond binary correctness. Building on this observation, we introduce Error Diversity Advantage Shaping (EDAS), a lightweight advantage-shaping method that redistributes penalties among incorrect rollouts according to their error frequency. Across multiple Qwen3 variants and two domains (math and code), EDAS improves performance over the corresponding baselines and helps maintain error diversity during training. Promising future directions include incorporating diversity signals from intermediate reasoning steps and extending EDAS to other verifiable domains. We discuss limitations in \autoref{ap:limitations}.




\clearpage
\bibliographystyle{plainnat}
\bibliography{cite}

\newpage
\beginappendix

\section{Proof of Zero-Sum Property in the Diverse Regime}\label{ap:proof}

\begin{theorem}[Zero-Sum Advantage Redistribution]
In the healthy diversity regime ($K > 1$), the proposed Error Diversity Advantage Shaping (EDAS) guarantees that the arithmetic mean of the normalized relative surprisal $T_i$ over all incorrect trajectories is exactly zero, i.e., $\frac{1}{N_w} \sum_{i \in \mathcal{W}} T_i = 0$.
\end{theorem}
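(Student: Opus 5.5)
The plan is a direct computation: group the sum over trajectories by error class, then recognize the entropy. Since $K>1$ forces $N_w \ge 2$, we have $\ln N_w > 0$. The normalizing factor $1/\ln N_w$ is therefore a well-defined positive constant shared by all $i \in \mathcal{W}$, and it can be pulled out of the sum. The claim then reduces to showing $\sum_{i \in \mathcal{W}} (I_i - H) = 0$, which is the same as
\[
\frac{1}{N_w}\sum_{i\in\mathcal{W}} I_i = H.
\]

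To establish this, I would use the partition $\mathcal{W} = \bigsqcup_{k=1}^K C_k$ from \autoref{sec:partition}. Every $i \in C_k$ has the same self-information $I_i = -\ln p_k$ with $p_k = |C_k|/N_w$. Rewriting the trajectory-level sum as a class-level sum gives
\[
\frac{1}{N_w}\sum_{i\in\mathcal{W}} I_i = \frac{1}{N_w}\sum_{k=1}^K |C_k|\,(-\ln p_k) = -\sum_{k=1}^K p_k \ln p_k = H,
\]
which matches the definition of $H$ exactly. In words, the empirical mean surprisal under the empirical distribution is the empirical entropy. Since $H$ is a constant, $\frac{1}{N_w}\sum_{i} (I_i - H) = H - H = 0$. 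Dividing by $\ln N_w$ then yields $\frac{1}{N_w}\sum_{i\in\mathcal{W}} T_i = 0$.

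No step here is a real obstacle. The only points that need care are bookkeeping:
\begin{itemize}
\item justifying that the sum over $\mathcal{W}$ may be regrouped by class, which uses disjointness and $\sum_k |C_k| = N_w$;
\item noting that each $p_k>0$, because classes are nonempty by construction, so $\ln p_k$ is finite;
\item noting that $\ln N_w \neq 0$, so the division is legitimate.
\end{itemize}

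As a corollary, $\sum_{i} \Delta_i = \alpha S \sum_i T_i = 0$ before clipping, since $\alpha$ and $S$ do not depend on $i$. This is the mean-preservation statement used in the main text. The bound $T_i \in [-1,1]$ is separate and needs $0 \le I_i, H \le \ln N_w$, but it is not required for this theorem.
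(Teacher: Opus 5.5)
Your proposal is correct and follows the paper's proof: pull out the common factor $1/\ln N_w$, regroup $\sum_{i\in\mathcal{W}} I_i$ by error class to get $N_w H$, and conclude that $\sum_{i\in\mathcal{W}}(I_i - H) = 0$. Your explicit check that $K>1$ forces $N_w \ge 2$, so that $\ln N_w > 0$ and the division is well defined, is a small point the paper leaves implicit.
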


 \begin{proof}
     Recall the definition of the normalized relative surprisal for the $i$-th incorrect trajectory:
     \begin{equation}
         T_i = \frac{I_i - H}{\ln N_w}
     \end{equation}

     where $I_i = -\ln p_k$ is the self-information of trajectory $i$ with $C_k$ denoting the class containing $y_i$, and $H$ is the empirical Shannon entropy of the group's error distribution. We are interested in the expected adjustment across the subset of incorrect trajectories $\mathcal{W}$, with $|\mathcal{W}| = N_w$.

     By substituting $T_i$ into the summation, we have:
     \begin{equation}
         \frac{1}{N_w} \sum_{i \in \mathcal{W}} T_i = \frac{1}{N_w \ln N_w} \sum_{i \in \mathcal{W}} (I_i - H)
     \end{equation}

     To evaluate $\sum_{i \in \mathcal{W}} I_i$, we can group the individual trajectories by their canonical equivalence classes $\mathcal{C} = \{C_1, C_2, \dots, C_K\}$. For any trajectory $i \in C_k$, its empirical probability is defined as $p_k = \frac{|C_k|}{N_w}$. Summing the self-information over all incorrect trajectories is equivalent to summing over the equivalence classes weighted by their frequencies $|C_k|$:

     \begin{equation}
         \sum_{i \in \mathcal{W}} I_i = \sum_{k=1}^K |C_k| \cdot \left( -\ln \frac{|C_k|}{N_w} \right)
     \end{equation}

     Substituting $|C_k| = N_w \cdot p_k$ into the equation yields:

     \begin{equation}
         \sum_{i \in \mathcal{W}} I_i = \sum_{k=1}^K N_w \cdot p_k (-\ln p_k) = N_w \sum_{k=1}^K -p_k \ln p_k
     \end{equation}

     By the definition of the empirical Shannon entropy $H$, the summation term $\sum_{k=1}^K -p_k \ln p_k$ is exactly $H$. Thus, we obtain:

     \begin{equation}
         \sum_{i \in \mathcal{W}} I_i = N_w H
     \end{equation}

     Substituting this back into our original equation:

     \begin{equation}
         \frac{1}{N_w} \sum_{i \in \mathcal{W}} (I_i - H) = \frac{N_w H - N_w H}{N_w} = 0
     \end{equation}

     Hence, $\frac{1}{N_w} \sum_{i \in \mathcal{W}} T_i = 0$. 
 \end{proof}

This property guarantees that the EDAS adjustment $\Delta_i \propto T_i$
operates strictly as an internal redistribution of advantage within the
incorrect subset. It reshapes the gradient landscape to favor minoritarian
hypotheses (exploration) and suppress majoritarian traps (error perseveration), without shifting the global baseline of the base RL algorithm.

\paragraph{Remark: Effect of $\kappa$-Clipping on the Zero-Sum Property.}
It is important to note that the zero-sum constraint $\sum_{i \in \mathcal{W}} T_i = 0$ holds exactly \emph{before} the monotonicity-preserving $\kappa$-clipping step described in \autoref{sec:clipping}. In practice, the clipping operation
\[
    \Delta_{\mathrm{final},i} = \operatorname{sgn}(\Delta_i)\cdot\min\!\left(|\Delta_i|,\,\tfrac{|A^{\mathrm{orig}}_i|}{\kappa}\right)
\]
may truncate some adjustments asymmetrically---for instance, when a frequent-error trajectory has a very small original $|A^{\mathrm{orig}}_i|$, its penalty term may be truncated. This introduces a \emph{slight imbalance} in the redistributed advantages: the effective sum $\sum_{i \in \mathcal{W}} \Delta_{\mathrm{final},i}$ may deviate marginally from zero.

We emphasize that this deviation is an intentional \textbf{engineering trade-off} rather than a theoretical deficiency. The $\kappa$-clipping serves a critical purpose: it prevents any single adjustment from inverting the sign of an incorrect trajectory's advantage, which would spuriously reward a wrong answer and destabilize policy updates. In practice, the imbalance induced by clipping is negligibly small---the clipping constraint is rarely active for the majority of trajectories---and does not meaningfully compromise the unbiasedness of the base estimator. The zero-sum property thus characterizes the \emph{ideal} behavior of the redistribution mechanism; clipping constrains that redistribution only at extreme values where unconstrained adjustments would be harmful to training stability.

\section{Proof of the Sign Non-Inversion Property of Monotonicity-Preserving Clipping}\label{ap:Clipping}

For any incorrect trajectory, the base algorithm guarantees $A_i^{\mathrm{orig}} < 0$. Given clipping margin $\kappa > 1$, the final advantage is
\[
    A_i^{\mathrm{final}} = A_i^{\mathrm{orig}} + \operatorname{sgn}(\Delta_i)\cdot\min\!\left(|\Delta_i|,\,\frac{|A_i^{\mathrm{orig}}|}{\kappa}\right).
\]

\begin{theorem}[Sign Non-Inversion]
For any incorrect trajectory with $A_i^{\mathrm{orig}} < 0$ and $\kappa > 1$,
\begin{enumerate}
    \item $A_i^{\mathrm{final}} < 0$ \quad (no sign inversion),
    \item $|A_i^{\mathrm{final}}| \geq |A_i^{\mathrm{orig}}|\!\left(1 - \tfrac{1}{\kappa}\right) > 0$ \quad (strict lower bound on magnitude).
\end{enumerate}
\end{theorem}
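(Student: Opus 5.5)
The plan is to set $a = |A_i^{\mathrm{orig}}| > 0$, so that $A_i^{\mathrm{orig}} = -a$, and to abbreviate the clipped adjustment as $\delta_i = \operatorname{sgn}(\Delta_i)\cdot\min\!\left(|\Delta_i|,\, a/\kappa\right)$. The first observation, which drives everything, is that $|\delta_i| \le a/\kappa$ holds regardless of the sign or size of $\Delta_i$. This uses only $|\operatorname{sgn}(\Delta_i)| \le 1$, together with the convention $\operatorname{sgn}(0)=0$ that covers the case $\Delta_i = 0$ (Branch A).

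I will then split into cases according to the sign of $\Delta_i$.

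\textbf{Case $\Delta_i \le 0$.} This covers Branches A and B, and the frequent-error part of Branch C. Here $\delta_i \le 0$, so
\[
A_i^{\mathrm{final}} = -a + \delta_i \le -a < 0,
\qquad
|A_i^{\mathrm{final}}| = a + |\delta_i| \ge a \ge a\left(1 - \tfrac{1}{\kappa}\right).
\]

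\textbf{Case $\Delta_i > 0$.} This is the rare-error part of Branch C. Here $0 < \delta_i \le a/\kappa$, so
\[
A_i^{\mathrm{final}} = -a + \delta_i \le -a + \tfrac{a}{\kappa} = -a\left(1 - \tfrac{1}{\kappa}\right).
\]
The quantity $-a\left(1 - \tfrac{1}{\kappa}\right)$ is strictly negative because $\kappa > 1$ and $a > 0$. Taking absolute values gives $|A_i^{\mathrm{final}}| \ge a\left(1 - \tfrac{1}{\kappa}\right) > 0$.

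Combining the two cases yields both claims of the theorem at once: claim 1 follows because the lower bound in claim 2 is strictly positive and the sign is negative in both cases.

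No step is genuinely hard. The only points needing care are the following.
\begin{itemize}
\item Making explicit that the hypothesis $A_i^{\mathrm{orig}} < 0$ is what rules out the degenerate situation $a = 0$. The remark about Dynamic Sampling concerns exactly this: in groups that are entirely incorrect, $A_i^{\mathrm{orig}} = 0$ and the strict bound would fail. I will state that the theorem is conditional on this hypothesis.
\item Noting that the bound in claim 2 is tight, attained when $\Delta_i \ge a/\kappa$, so it cannot be improved in general.
\end{itemize}
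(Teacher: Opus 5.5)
Your proof is correct and matches the paper's argument: both split on the sign of $\Delta_i$. The case $\Delta_i \le 0$ is trivial. In the case $\Delta_i>0$, both use that the clipped adjustment satisfies $\min(|\Delta_i|,|A_i^{\mathrm{orig}}|/\kappa)\le |A_i^{\mathrm{orig}}|/\kappa<|A_i^{\mathrm{orig}}|$, which gives both negativity and the $(1-1/\kappa)$ magnitude bound. The only cosmetic difference is that you read off negativity from the sharper bound $A_i^{\mathrm{final}}\le -|A_i^{\mathrm{orig}}|(1-1/\kappa)$, whereas the paper compares directly with $A_i^{\mathrm{orig}}+|A_i^{\mathrm{orig}}|=0$.
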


\begin{proof}
Let $M = \min\!\left(|\Delta_i|,\,\tfrac{|A_i^{\mathrm{orig}}|}{\kappa}\right) \geq 0$.

\paragraph{Case 1: $\Delta_i \leq 0$ (penalty amplification).}
$A_i^{\mathrm{final}} = A_i^{\mathrm{orig}} - M \leq A_i^{\mathrm{orig}} < 0$, so (1) holds trivially. Moreover, $|A_i^{\mathrm{final}}| = |A_i^{\mathrm{orig}}| + M \geq |A_i^{\mathrm{orig}}| > |A_i^{\mathrm{orig}}|(1-\tfrac{1}{\kappa})$, so (2) holds.

\paragraph{Case 2: $\Delta_i > 0$ (penalty attenuation).}
$A_i^{\mathrm{final}} = A_i^{\mathrm{orig}} + M$. Since $M \leq \tfrac{|A_i^{\mathrm{orig}}|}{\kappa} < |A_i^{\mathrm{orig}}|$ (as $\kappa > 1$), the adjustment cannot overcome the negative base, so
\[
    A_i^{\mathrm{final}} = A_i^{\mathrm{orig}} + M < A_i^{\mathrm{orig}} + |A_i^{\mathrm{orig}}| = 0,
\]
establishing (1). For the magnitude bound, $|A_i^{\mathrm{final}}| = |A_i^{\mathrm{orig}}| - M \geq |A_i^{\mathrm{orig}}| - \tfrac{|A_i^{\mathrm{orig}}|}{\kappa} = |A_i^{\mathrm{orig}}|(1 - \tfrac{1}{\kappa}) > 0$, establishing (2).
\end{proof}

Hence, the clipping mechanism ensures that incorrect trajectories always carry a strictly negative advantage, preventing any spurious reward while bounding the residual gradient magnitude away from zero. In practice, while the clipping margin $\kappa$ safeguards the advantage sign, we recommend enabling Dynamic Sampling~\cite{yu2025dapo} when applying EDAS. This is particularly crucial in edge cases where all sampled sequences are incorrect, as it avoids potential scenarios where incorrect sequences might otherwise receive positive reward signals, ensuring stable policy updates across all group configurations.

\section{Correlation between Error Diversity and RLVR Gains}\label{ap:des4moti}

In this section, we give a detailed description of \autoref{fig:uprate}. To investigate the relationship between the baseline model's failure modes and the effectiveness of RLVR, we conduct a fine-grained analysis on Qwen3-8B-Instruct across six mainstream mathematical benchmarks: AIME (2024–2026), AMC (2023–2024), and HMMT Feb 2025.

\textbf{Experimental Setup:} We apply the state-of-the-art RLVR algorithm, DAPO, training the model on the DAPO-MATH-17K dataset for two epochs. We then analyze the performance shifts by comparing the checkpoints before and after the RLVR process.

\textbf{Metrics:} We define error diversity as the ratio of unique incorrect responses to the total number of incorrect trials during 32-sample repeated testing. The primary evaluation metric is the Improvement Rate, which tracks the portion of problems where the RLVR-tuned model achieves a higher Average Pass Rate (APR) compared to its predecessor.

\textbf{Grouping:} Problems are partitioned into four equal quartiles ($Q_1$ to $Q_4$) based on their diversity index. We specifically analyze two cohorts: (1) Error-prone problems, where the base model achieved an $APR < 100\%$ ; and (2) Hardest Subsets, representing the most challenging cases where the base model’s initial $APR$ was exactly $0\%$.

\textbf{Observations:} As shown in \autoref{fig:uprate}, there is a consistent upward trend in improvement across both cohorts as diversity increases. Notably, for problems in $Q_4$ (highest diversity), the improvement rate reaches approximately 80\%. The error bars in the plot further illustrate the specific range of diversity values within each quartile for the Error-prone set. This trend indicates that problems with multiple distinct failure modes offer more informative feedback during the RLVR process, leading to more robust policy optimization even on tasks where the model initially showed no signs of success.

\section{Validation of Final Answer as a Proxy for Reasoning Path Consistency}\label{ap:des4moti2}
To investigate the logical origins of model failures, we designed an evaluation pipeline to determine if different reasoning trajectories leading to the same incorrect answer share a fundamental logical root. We collected reasoning samples from six math competition benchmarks: AIME (2024–2026), AMC (2023–2024), and HMMT (Feb 2025), sampling each problem 32 times to generate a diverse set of reasoning paths and final answers.

The experimental pipeline first groups all incorrect trajectories for a given problem by their final answer strings. To analyze the internal consistency of these errors, we enumerate all possible pairs within each wrong-answer group (within-group pairs) and employ a judge model (Gemini 3 Flash Preview) to determine if both trajectories fail due to the same fundamental logical error, regardless of phrasing differences. For comparison, we establish a baseline by selecting a representative trajectory from each distinct wrong-answer group to evaluate pairs that produced different incorrect results (cross-group pairs).
We are using the prompt shown below:

\begingroup
\centering
\begin{tcolorbox}[
  colback=casered!6,
  colframe=casered!100,
  coltitle=white,
  fonttitle=\bfseries\normalsize,
  title={JUDGE\_PROMPT\_TEMPLATE},
  boxrule=0.8pt,
  arc=2pt,
  left=8pt, right=8pt, top=6pt, bottom=6pt,
  fontupper=\normalfont\small,
]
I will provide a math problem, its ground truth answer, and two incorrect reasoning trajectories (Trajectory A and Trajectory B).

\smallskip
Your task is to determine if both trajectories fail due to the \textbf{exact same fundamental logical error or misunderstanding}, or if their root causes of failure are \textbf{substantively different}.
Ignore differences in phrasing, variable naming, or formatting. Focus solely on the mathematical logic.

\smallskip
Problem: \texttt{\{problem\}} \\
Ground Truth: \texttt{\{ground\_truth\}} \\
Trajectory A: \texttt{\{traj\_a\}} \\
Trajectory B: \texttt{\{traj\_b\}} \\
\smallskip
Output JSON format:
\begin{verbatim}
{
  "is_same_error_root": true/false
}
\end{verbatim}
\end{tcolorbox}
\endgroup
\vspace{0.5em}

The resulting metrics—the within-group same-root rate ($y$-axis) and the cross-group same-root rate ($x$-axis)—are calculated as the fraction of same-root judgments within their respective categories. To ensure a robust comparison, we filter for questions that contain at least two distinct wrong-answer groups, with at least one group containing multiple trajectories, resulting in $N = 127$ valid problems. The strong tendency for points to cluster above the diagonal validates our hypothesis: shared wrong answers serve as a reliable proxy for shared reasoning error roots, indicating that the diversity of final answers effectively reflects the diversity of underlying logical failures.

\section{EDAS Implementation of Code Generation}\label{ap:code_gene}

For each incorrect rollout, the error classification follows a hierarchical process. We first identify compilation failures (e.g., \texttt{SyntaxError}), assigning the exception name as the label. Remaining rollouts are executed in a sandbox; those failing test cases are labeled \texttt{WrongAnswer}, while those triggering runtime exceptions are categorized by their specific exception types (e.g., \texttt{TypeError}, \texttt{IndexError}). This taxonomy naturally captures the diversity of model failures: a distribution spanning multiple runtime exceptions indicates a more diverse exploration of the task compared to a model limited to repetitive structural or syntactic errors.

\section{Experimental Details}\label{ap:exp_det}

This appendix provides a comprehensive summary of the hyperparameters and experimental configurations used for the mathematical reasoning and code generation tasks.

\subsection{Evaluation}

\paragraph{Math Reasoning.}

The maximum reasoning length for evaluation is set to 32,768 tokens. For datasets with limited samples (AIME 24-26, AMC 23-24, and HMMT 25), we employ a repeat-32 strategy with a sampling temperature of $T=0.7$, top-p of 0.95, and top-k of 40 to report the average accuracy. For OlympiadBench, we utilize greedy decoding to ensure deterministic results. To ensure the robustness of answer extraction and matching, we leverage the Math-Ruler library\footnote{https://github.com/hiyouga/MathRuler}. To assess how the model's capabilities evolve, we employ the unbiased estimator proposed by \cite{chen2021evaluating} to calculate Pass@$k$:

\begin{equation}
\text{Pass}@k = \mathbb{E}_{x \sim \mathcal{D}} \left[ 1 - \frac{\binom{n-c}{k}}{\binom{n}{k}} \right]
\end{equation}

where $n$ ($n \ge k$) denotes the total number of generated sequences per problem, and $c$ represents the number of correct sequences among them.

\paragraph{Code Generation.}
For code-generation evaluation, we set the maximum generation length to 16,384 tokens. We adopt benchmark-specific repeated sampling and report Pass@8 for LiveCodeBench v6 and Codeforces, and Pass@1 for HumanEval+, MBPP+. For repeated sampling, we use temperature $T=0.7$, top-$p=0.8$. A submission is considered correct only when it passes all test cases for a given problem. For robustness, code outputs are post-processed with benchmark-specific extraction and then validated through sandboxed execution or official evaluation protocols (e.g., EvalPlus for HumanEval+/MBPP+).

\subsection{Training Setup}

\paragraph{Math Reasoning.}

We utilized DAPO-Math-17K~\cite{yu2025dapo} as our training dataset, which is a widely recognized dataset for mathematical reasoning tasks within the RLVR domain. All algorithms were trained for two epochs on this dataset. During the training phase, we set the rollout batch size to 256, with 10 response sequences generated per prompt and a global batch size of 64. For the sampling strategy, both temperature and top-p were set to 1.0 to ensure stochastic diversity. Following standard protocols in mathematical RLVR, the correctness of the final answer served as the sole source of the reward signal.

\paragraph{Code Generation.}

We train Qwen3-4B on the Code-R1~\cite{code-r1} dataset using DAPO as the base algorithm, with and without EDAS. The maximum prompt and response lengths are set to 2,048 and 4,096 tokens, respectively. We generate 10 rollouts per prompt with the same batch-size setting as in mathematical reasoning and train for one epoch. The EDAS hyperparameters are $\alpha = 0.4$, $\beta = 0.2$, $\kappa = 2.0$, consistent with the math experiments.

\section{Baseline Setup}\label{ap:base_setup}

In this section, we describe the baselines used in our experiments.

\paragraph{GRPO and DAPO.}

GRPO serves as a foundational algorithm in the RLVR domain, making it a natural choice for our primary baseline. Building upon this, DAPO~\cite{yu2025dapo} introduces Clip-Higher and Dynamic Sampling techniques, achieving substantially stronger performance than GRPO.

\paragraph{PKPO.}
For PKPO, we set the hyperparameter $k=8$. Given that rewards in the RLVR setting are strictly binary, we adopt the discrete formulation as proposed in the original study. Specifically, let $y_i$ denote the $i$-th generated response and $y^*$ be the ground-truth answer. The reward assignment is defined as:

\begin{equation}
r_i =
\begin{cases}
\dfrac{k}{n} & \text{if } \mathbb{I}(y_i = y^*) = 1 \\[8pt]
\dfrac{k}{n} \cdot \rho(n-1, c, k-1) & \text{if } \mathbb{I}(y_i = y^*) = 0
\end{cases}
\end{equation}

where the term $\rho(n, c, k)$ is defined as the unbiased estimator of the Pass@$k$ rate:

\begin{equation}
    \rho(n, c, k) \equiv 1 - \frac{\binom{n-c}{k}}{\binom{n}{k}} 
\end{equation}

\paragraph{Entropy-Enhanced Advantage.}

To implement the Entropy-Enhanced Advantage baseline, we follow the original configuration by reshaping the advantage values. This approach augments the learning signal with a weighted entropy term to encourage token-level diversity:

\begin{equation}
A_t^{\mathrm{reshape}} = A_t + \min\left( \alpha \cdot \mathcal{H}_t^{\text{detach}},\, \frac{|A_t|}{\kappa} \right),\ \text{where } \alpha > 0 \text{ and } \kappa > 1,
\end{equation}

In this formulation, $\mathcal{H}_t$ represents the token-level entropy at each position $t$:

\begin{equation}
\mathcal{H}_t = -\sum_{v \in \mathcal{V}} \pi_{\theta}(v \mid q, o_{<t}) \log \pi_{\theta}(v \mid q, o_{<t}).
\end{equation}

Following the recommended settings in the literature, we employ a scaling factor $\alpha = 0.4$ and a clipping threshold $\kappa = 2$.


\section{Why Embedding-Based Answer Grouping Cannot Replace Symbolic Equivalence}\label{ap:embedding_failure}

A natural alternative to the symbolic equivalence check used in EDAS's intra-group error partitioning (\mbox{\autoref{sec:partition}}) is to cluster incorrect responses by their \emph{embedding similarity}—representing each generated trajectory as a dense vector and grouping those whose vectors lie close together.
This approach is appealing because it does not require answer extraction and could, in principle, capture semantic differences even when answers are expressed in varied surface forms.
However, we present empirical evidence that embedding similarity is fundamentally unsuitable for distinguishing mathematically correct from incorrect reasoning, and therefore cannot serve as a reliable basis for the diversity signal central to EDAS.

\paragraph{Setup.}
We collected 32 repeated rollouts per problem for all 30 problems of AIME 2025, generated by \texttt{Qwen3-8B} (960 total trajectories).
Final answers were extracted via \texttt{\textbackslash boxed\{\}} pattern matching.
For each problem that elicited at least two numerically distinct answers, we selected up to six representative trajectories (one per distinct answer), embedded each full reasoning chain with \texttt{BAAI/bge-large-en-v1.5}, and computed pairwise cosine similarity.

\paragraph{Deceptive pairs.}
Out of 403 pairs with numerically different answers, \textbf{308} (76.4\%) achieved cosine similarity ${>}0.95$, with an average of 0.978 and a maximum of 1.000. This means that for the vast majority of answer disagreements, an embedding-based clustering method would \emph{merge} the two trajectories into the same group despite their producing different answers.

\paragraph{Deceptive triples.}
To stress-test the failure in the context most directly relevant to EDAS—where we need to distinguish a \emph{correct} trajectory from \emph{two different wrong} trajectories—we identify naturally occurring triples from the model's own rollouts: for each problem, we pair the trajectory reaching the correct answer with two trajectories reaching \emph{distinct} wrong answers, and measure the minimum pairwise similarity across all three pairs.
Table~\ref{tab:embed_triples} summarizes the distribution over 114 such triples.

\begin{table}[h]
\centering
\caption{Minimum pairwise embedding similarity for (correct, wrong$_1$, wrong$_2$) triples on AIME 2025 (Qwen3-8B, 32 rollouts/problem).}
\label{tab:embed_triples}
\begin{tabular}{lc}
\toprule
\textbf{Condition} & \textbf{Count (out of 114)} \\
\midrule
$\min\text{-sim} \ge 0.99$ & 2  \\
$\min\text{-sim} \ge 0.98$ & 19 \\
$\min\text{-sim} \ge 0.95$ & 76 \\
Average $\min\text{-sim}$  & 0.956 \\
Maximum $\min\text{-sim}$  & 0.997 \\
\bottomrule
\end{tabular}
\end{table}

\paragraph{Illustrative example.}
For AIME 2025 Problem 5 (isosceles trapezoid with inscribed circle of radius 3 and area 72; find $r^2+s^2$), 27 out of 32 rollouts arrive at the correct answer 504, while isolated rollouts produce 48 and 424 respectively.
The three pairwise cosine similarities are 0.997, 0.997, and 1.000 (average 0.998).
An embedding-based grouping scheme would classify all three trajectories into the same cluster, completely failing to detect that two of them are erroneous.

\paragraph{Root cause and implications.}
The fundamental reason for this failure is that general-purpose embedding models encode \emph{surface-level textual features}—vocabulary, sentence structure, topic, and style—rather than the \emph{logical validity of a reasoning chain}.
Mathematical reasoning errors often reduce to a single incorrect intermediate step or a one-digit numerical discrepancy, and such differences are absorbed by the high-dimensional embedding space as negligible perturbations.

This finding has a direct implication for EDAS: the equivalence relation used to partition incorrect trajectories into error classes (\autoref{sec:partition}) \emph{must} be grounded in symbolic answer extraction and mathematical canonicalization (via tools such as \texttt{Math-Ruler}), not in embedding proximity.
Using embedding-based grouping as a drop-in replacement would collapse the diversity signal: answers that are numerically far apart (and hence represent genuinely different failure modes) would be counted as the same error type, eliminating the intra-group entropy that EDAS relies on to modulate exploration.


\section{Bottleneck-Breaking Analysis}\label{ap:bottleneck}

A key motivation for EDAS is to help the model escape problems where it consistently fails under the base policy---problems we term \emph{hard} problems, defined as those for which the base model's Average Pass Rate (APR) across 32 repeated rollouts is exactly $0\%$.
These problems represent the hardest frontier of the model's capability: the base policy has \emph{never} produced a correct solution in 32 attempts, so any improvement must come from genuinely new reasoning paths discovered during RL training.
We analyse how many such problems each algorithm ``breaks through'' after training.

\paragraph{Setup.}
We evaluate on four challenging benchmarks: AIME 2025, AIME 2026, AMC 2024, and HMMT Feb 2025, using 32 rollouts per problem.
For each benchmark, we identify the subset of problems where the base model achieves $\text{APR} = 0$, then measure the fraction of these problems for which DAPO and EDAS achieve $\text{APR} > 0$ after training---the \emph{breakthrough rate}.
We conduct this analysis across three model configurations to test generality:
\begin{itemize}
  \item \textbf{Qwen3-8B} (Table~\ref{tab:bottleneck_8b}): the primary setting reported in the main paper.
  \item \textbf{Qwen3-4B} with no-think prompting (Table~\ref{tab:bottleneck_4b_instruct}): a smaller instruct model where the thinking mode is disabled.
  \item \textbf{Qwen3-4B-Base} (Table~\ref{tab:bottleneck_4b_base}): a smaller base (pre-trained only) model without instruction tuning.
\end{itemize}

\paragraph{Results on Qwen3-8B.}
Table~\ref{tab:bottleneck_8b} (in the main paper) summarizes the per-benchmark results on Qwen3-8B.
Across the four benchmarks, 64 problems are classified as hard for the base model.
After training, DAPO breaks through on 22 of these (34.4\%), while EDAS breaks through on 30 (46.9\%), a relative improvement of \textbf{36\%} in breakthrough rate.
Notably, there are \textbf{10 problems} (15.6\% of all hard problems) where EDAS achieves $\text{APR} > 0$ but DAPO remains at $0\%$, demonstrating that EDAS discovers reasoning paths that DAPO never finds.

\paragraph{Results on Qwen3-4B.}
Table~\ref{tab:bottleneck_4b_instruct} shows the same analysis on Qwen3-4B (instruct, no-think mode).
Out of 54 hard problems, DAPO breaks through on 22 (40.7\%) while EDAS breaks through on 24 (44.4\%), with 3 EDAS-only breakthroughs versus only 1 DAPO-only.
The advantage is directionally consistent with Qwen3-8B but more modest (+9.1\% relative improvement), likely because the instruction-tuned model already possesses some exploration diversity inherited from supervised fine-tuning, leaving less room for EDAS's diversity mechanism to add value.

\begin{table}[h]
\centering
\caption{Bottleneck-breaking analysis on Qwen3-4B (Instruct, no-think). Hard = base model $\text{APR}=0$ over 32 rollouts.}
\label{tab:bottleneck_4b_instruct}
\begin{tabular}{lcccc}
\toprule
\textbf{Benchmark} & \textbf{Hard} & \textbf{DAPO Break} & \textbf{EDAS Break} & \textbf{EDAS-only} \\
\midrule
AIME 2025    & 10 &  6 & \textbf{ 6} & 1 \\
AIME 2026    & 13 &  6 & \textbf{ 6} & 0 \\
AMC 2024     & 12 &  5 & \textbf{ 6} & 1 \\
HMMT Feb 25  & 19 &  5 & \textbf{ 6} & 1 \\
\midrule
\textbf{Total} & \textbf{54} & \textbf{22} (40.7\%) & \textbf{24} (44.4\%) & \textbf{3} \\
\bottomrule
\end{tabular}
\end{table}

\paragraph{Results on Qwen3-4B-Base.}
Table~\ref{tab:bottleneck_4b_base} presents the results on Qwen3-4B-Base, the most revealing setting.
Of 90 hard problems, DAPO breaks through on only 18 (20.0\%), while EDAS breaks through on 31 (34.4\%)---a relative improvement of \textbf{72.2\%}, nearly double the gain observed on Qwen3-8B.
The EDAS-only count reaches \textbf{14 problems} (15.6\% of all hard problems), compared to merely \textbf{1} DAPO-only breakthrough.
The effect is especially striking on AIME 2025, where DAPO achieves zero breakthroughs while EDAS breaks through on 4 problems, and on HMMT Feb 2025 (DAPO: 1, EDAS: 6).

\begin{table}[h]
\centering
\caption{Bottleneck-breaking analysis on Qwen3-4B-Base. Hard = base model $\text{APR}=0$ over 32 rollouts.}
\label{tab:bottleneck_4b_base}
\begin{tabular}{lcccc}
\toprule
\textbf{Benchmark} & \textbf{Hard} & \textbf{DAPO Break} & \textbf{EDAS Break} & \textbf{EDAS-only} \\
\midrule
AIME 2025    & 18 &  0 & \textbf{ 4} & 4 \\
AIME 2026    & 24 &  4 & \textbf{ 7} & 4 \\
AMC 2024     & 25 & 13 & \textbf{14} & 1 \\
HMMT Feb 25  & 23 &  1 & \textbf{ 6} & 5 \\
\midrule
\textbf{Total} & \textbf{90} & \textbf{18} (20.0\%) & \textbf{31} (34.4\%) & \textbf{14} \\
\bottomrule
\end{tabular}
\end{table}

\paragraph{Discussion.}
The superiority of EDAS on hard problems is consistent across all three model scales and configurations, confirming that the effect is systematic rather than model-specific.
Because EDAS actively penalizes mode collapse and promotes rare error types, the policy is continuously nudged away from homogeneous failure modes.
When the base model fails all 32 attempts, it typically does so by converging to the same incorrect reasoning chain; DAPO inherits this mode-collapsed behavior and struggles to escape it.
EDAS's diversity-encouraging mechanism diversifies the exploration landscape even for the hardest problems, increasing the probability that at least one rollout discovers a correct reasoning path---thereby generating a positive reward signal that DAPO never receives.

Comparing across model configurations reveals an informative gradient.
The advantage is largest on Qwen3-4B-Base (+72.2\% relative), moderate on Qwen3-8B (+36\%), and smallest on Qwen3-4B Instruct (+9.1\%).
This ordering aligns with the degree of mode collapse in the initial policy: base models without instruction tuning are most prone to homogeneous failure modes, and therefore benefit the most from EDAS's diversity-encouraging objective.
Instruction-tuned models already exhibit some exploration diversity from supervised fine-tuning, leaving less headroom for improvement.
The consistent EDAS-only breakthroughs across all settings and benchmarks---10 on 8B, 14 on 4B-Base, and 3 on 4B Instruct---confirm that EDAS systematically discovers reasoning paths that DAPO never finds, regardless of model scale.


\section{Internal Dynamics of EDAS's Advantage Adjustment}\label{ap:internal}

We log EDAS's internal statistics throughout training (2 epochs on DAPO-Math-17K) and compare them against the same group-level metrics passively recorded from DAPO on the identical training setup (Qwen3-8B, DAPO-Math-17K, 2 epochs, batch size 256).
This provides a controlled comparison of how the two algorithms shape the error landscape during RLVR.
Note that for DAPO, these metrics are observed but not acted upon; EDAS actively modulates advantages based on them.

\subsection{Metric Definition}

For each training step, we count the number of prompt groups exhibiting healthy error diversity ($K > 1$, i.e., at least two distinct wrong answer types coexist among the incorrect trajectories).
This metric directly measures the breadth of the model's exploration: more diverse groups mean more varied failure modes, providing richer gradient signal for discovering correct reasoning paths.
We compute this identically for both EDAS and DAPO: for each group, we extract incorrect answers from the rollouts, canonicalise them, and count the number of distinct error types $K$.

\subsection{EDAS Preserves Significantly More Diverse Groups}

As shown in Figure~\ref{fig:internal}, EDAS consistently maintains far more diverse groups than DAPO throughout training.
Averaged across the entire training run (2 epochs), EDAS preserves a mean of 144.1 diverse groups per step compared to DAPO's 89.5---a \textbf{1.61$\times$} ratio overall.
The advantage grows over training: during the first epoch the ratio averages 1.45$\times$, while during the second epoch it widens to \textbf{1.98$\times$}.

This widening gap reflects a fundamental dynamic: as standard RLVR training concentrates the policy on higher-reward patterns, error diversity naturally erodes.
DAPO's diverse groups decline from 182 to as low as 54---a 70.3\% collapse.
EDAS counteracts this erosion: its diverse groups decline more gently (from 189 to a minimum of 112) and partially recover in late training, ending at 146.
The recovery is driven by EDAS's Branch B penalty, which forces mode-collapsed groups ($K{=}1$) to diversify, replenishing the pool of diverse groups even as training progresses.

\begin{figure}[tp]
    \centering
    \includegraphics[width=0.55\linewidth]{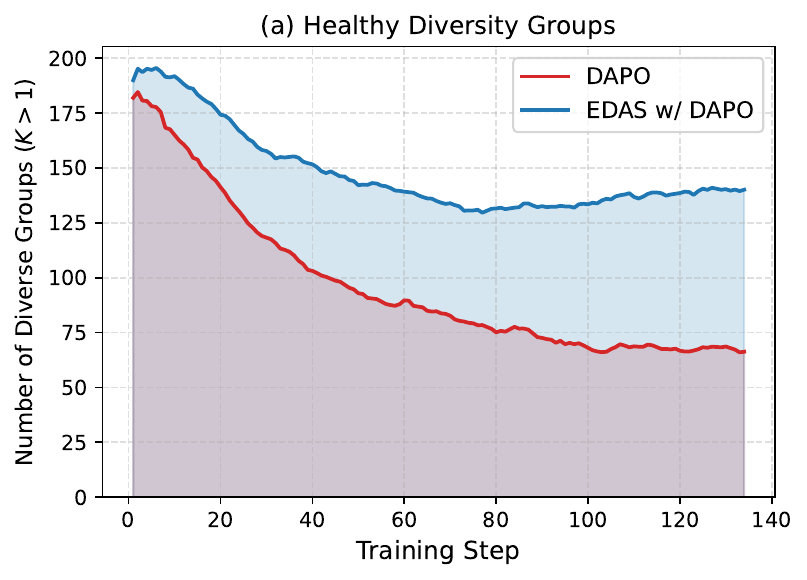}
    \caption{Number of groups with healthy error diversity ($K > 1$) across 2 epochs of training. EDAS (blue) maintains on average 1.71$\times$ more diverse groups than DAPO (red). The gap widens from 1.45$\times$ in the first epoch to 1.98$\times$ in the second epoch, demonstrating that EDAS's diversity preservation strengthens precisely as the remaining problems grow harder.}
    \label{fig:internal}
\end{figure}

\subsection{Implications}

This result provides direct mechanistic evidence for EDAS's effectiveness:

\begin{itemize}
    \item \textbf{Diversity preservation is effective.} Despite both algorithms starting from the same initial diversity (189 vs.\ 182 at step 1), EDAS's Branch B/C mechanism prevents the severe diversity collapse that DAPO experiences (70.3\% decline vs.\ 40.7\%).
    \item \textbf{The effect strengthens when it matters most.} The EDAS/DAPO ratio widens from 1.45$\times$ to 1.98$\times$ from the first to the second epoch. In the second epoch, the remaining unsolved problems are the hardest, and maintaining diverse error types on these problems is critical for the model to discover new reasoning paths---exactly the mechanism behind the bottleneck-breaking improvements reported in \autoref{ap:bottleneck}.
    \item \textbf{EDAS actively reverses diversity loss.} Rather than merely slowing collapse, EDAS's diverse group count partially \emph{recovers} in the second epoch (from a minimum of 112 back to 146), indicating that Branch B successfully forces collapsed groups to re-diversify.
\end{itemize}

\section{Limitations}\label{ap:limitations}

While EDAS demonstrates consistent improvements across multiple models and domains, we identify several limitations that contextualize the current results and suggest directions for future work.

\paragraph{Reliance on Observable Output Features for Error Partitioning.}
EDAS defines error equivalence through observable output features---final numerical answers for mathematical reasoning and exception types for code generation. This design, while effective and computationally efficient, cannot distinguish different reasoning paths that happen to produce the same observable output. For instance, two trajectories may arrive at the same incorrect numerical answer through entirely different logical errors, yet EDAS would classify them as belonging to the same equivalence class. Incorporating process-level diversity signals, e.g., based on intermediate reasoning steps or execution traces, could yield a finer-grained error taxonomy but would introduce significant computational overhead and design complexity.

\paragraph{Domain-Specific Engineering of the Equivalence Function.}
Although EDAS's downstream computation is fully domain-agnostic, the equivalence function $\mathcal{E}$ must be manually designed for each new domain. For mathematical reasoning and code generation, natural choices exist (canonicalized answers and exception types, respectively). However, extending EDAS to domains such as open-ended text generation, multi-step tool use, or formal theorem proving requires identifying appropriate discrete error labels---a non-trivial design decision that may limit out-of-the-box applicability.

\paragraph{Fixed Hyperparameters Across Training.}
The current formulation uses fixed hyperparameters ($\alpha$, $\beta$, $\kappa$) throughout training. As the policy evolves, the optimal balance between diversity encouragement and collapse penalty may shift: early training may benefit from stronger exploration incentives, while later stages may require more conservative adjustments to avoid destabilizing a near-converged policy. Adaptive scheduling of these hyperparameters remains unexplored.

\paragraph{Sensitivity to Group Size and Sampling Budget.}
EDAS's information-theoretic quantities (entropy, self-information) are estimated from finite samples within each rollout group. With the group size fixed at $N=10$ in our experiments, the entropy estimate may be noisy for problems that admit many distinct error types, potentially underestimating the true diversity. Larger group sizes would improve estimation quality but proportionally increase computational cost. Exploring efficient training architectures~\cite{xu2026grouter} to reduce the overhead of larger rollout groups is a promising direction. The interaction between group size, diversity estimation accuracy, and downstream performance has not been systematically characterized.

\paragraph{Evaluation Scope.}
Our experiments focus on mathematical reasoning (competition-level problems) and code generation. While these represent two important verifiable domains, the generalizability of EDAS to other settings---such as formal theorem proving, scientific reasoning, multimodal visual reasoning~\cite{dong2025interleaved}, or multi-hop question answering with verifiable answers---remains empirically unvalidated. The consistently positive results across our tested domains are encouraging, but do not guarantee transferability to domains with fundamentally different error structures.
\end{document}